\newcommand{\cmark}{\ding{51}}%
\newcommand{\xmark}{\ding{55}}%
\newtheorem{definition}{Definition}
\newtheorem{problem}{Problem}
\newtheorem{theorem}{Theorem}
\algrenewcommand\algorithmicrequire{\textbf{Input:}}
\algrenewcommand\algorithmicensure{\textbf{Output:}}
\DeclareMathOperator*{\argmax}{arg\,max}
\def\BibTeX{{\rm B\kern-.05em{\sc i\kern-.025em b}\kern-.08em
    T\kern-.1667em\lower.7ex\hbox{E}\kern-.125emX}}
\newcommand{\bb}
\title{
Signal Temporal Logic Compliant Co-design of Planning and Control
}
\author{Manas Sashank Juvvi$^{\ddag, 1,2}$, Tushar Dilip Kurne$^{\ddag,1}$, Vaishnavi J$^{\ddag,1}$, Shishir Kolathaya$^1$, Pushpak Jagtap$^1$ % <-this % stops a space
\thanks{This work was supported in part by ARTPARK and Siemens.}
%\thanks{}% <-this % stops a space
\thanks{$^\ddag$Authors contributed equally.}
\thanks{
$^1$ Centre for Cyber-Physical Systems, IISc, Bangalore, India} \thanks{$^2$ Technical University Delft, Delft {\tt\footnotesize \{tusharkurne, vaishnavij, shishirk,pushpak\}@iisc.ac.in,mjuvvi@tudelft.nl}}%
}
\begin{document}

\maketitle
\thispagestyle{empty}
\pagestyle{empty}

%%%%%%%%%%%%%%%%%%%%%%%%%%%%%%%%%%%%%%%%%%%%%%%%%%%%%%%%%%%%%%%%%%%%%%%%%%%%%%%%
\begin{abstract}
This work proposes a novel co-design strategy that integrates trajectory planning and control for motion planning in autonomous robots, to handle signal temporal logic (STL) tasks. The approach is structured into two phases: $(i)$ learning spatio-temporal motion primitives to encapsulate the inherent robot-specific constraints and $(ii)$ constructing STL-compliant motion plan using learned spatio-temporal motion primitives. Initially, we employ reinforcement learning to construct a library of control policies that perform trajectories described by the motion primitives. Then, we map motion primitives to spatio-temporal characteristics. Subsequently, we present a sampling-based STL-compliant motion planning strategy tailored to meet the STL specification. We demonstrate the effectiveness and adaptability of our framework through experiments conducted on `Differential-drive robot' and `Quadruped' for multiple STL specifications and environments. Videos of the demonstrations can be accessed at \href{https://youtu.be/xo2cXRYdDPQ}{https://youtu.be/xo2cXRYdDPQ}.
The proposed framework is entirely model-free and capable of generating feasible STL-compliant motion plans across diverse environments.
\end{abstract}
\section{Introduction}\label{introduction}

Motion planning plays a significant role in autonomous navigation in complex environments for robots. Recent advancements in sampling-based methods have enhanced motion planning in complex spaces. Traditionally, the focus has been on developing controllers that guide robots from the initial state to the desired end state. However, as robots become more sophisticated, tasks grow in complexity, requiring advanced specification techniques. Signal temporal logic (STL) \cite{s1} offers an efficient framework for formulating complex tasks with spatial, temporal, and logical constraints. Using STL specifications, one can formally and efficiently capture complex motion planning tasks.  For example, a robot is given a task to transfer materials from a pick-up location to a drop location in a large workspace in a specified interval of time while avoiding static obstacles. In addition to this, it must avoid moving close to locations where some maintenance activity is going on for predetermined intervals of time, considering the locations as time-dependent obstacles. These tasks can be formally modeled using STL. Additionally, STL representation offers a comprehensive approach for quantitatively assessing system adherence to STL formulae through robustness metrics \cite{s2,s4}.

Several works in the literature have addressed STL specifications-based planning by incorporating STL semantics into the cost function of sampling-based path planners. In \cite{s5}, the authors developed a sampling-based method for synthesizing the controller that maximizes the spatial robustness of the STL specification using a partial robustness metric as the cost function as defined in \cite{s6}. The authors in \cite{s7} defined the cost functions to balance between spatial and time robustness of the sampling-based planner. Linard et al. \cite{s9} proposed an approach to deal with dynamic obstacles using a real-time Rapidly Exploring Random Tree (RT-RRT*) algorithm with an STL robustness-based cost function computed recursively along the nodes of the tree. 

While these works deal with generating path plans to satisfy STL specifications, most of them are not concerned with whether the plan is executable by the robot. The authors in \cite{s5} incorporate knowledge of the robot's mathematical model for path-planning and control purposes. Other studies explore tracking paths with temporal waypoints through optimization-based methods \cite{s10,s11}. However, all these approaches rely on knowledge of mathematical models of the system, which are difficult to derive for complex systems.

\begin{figure}[t]
    \centering
    \includegraphics[width=.36\textwidth, height=.18\textwidth]{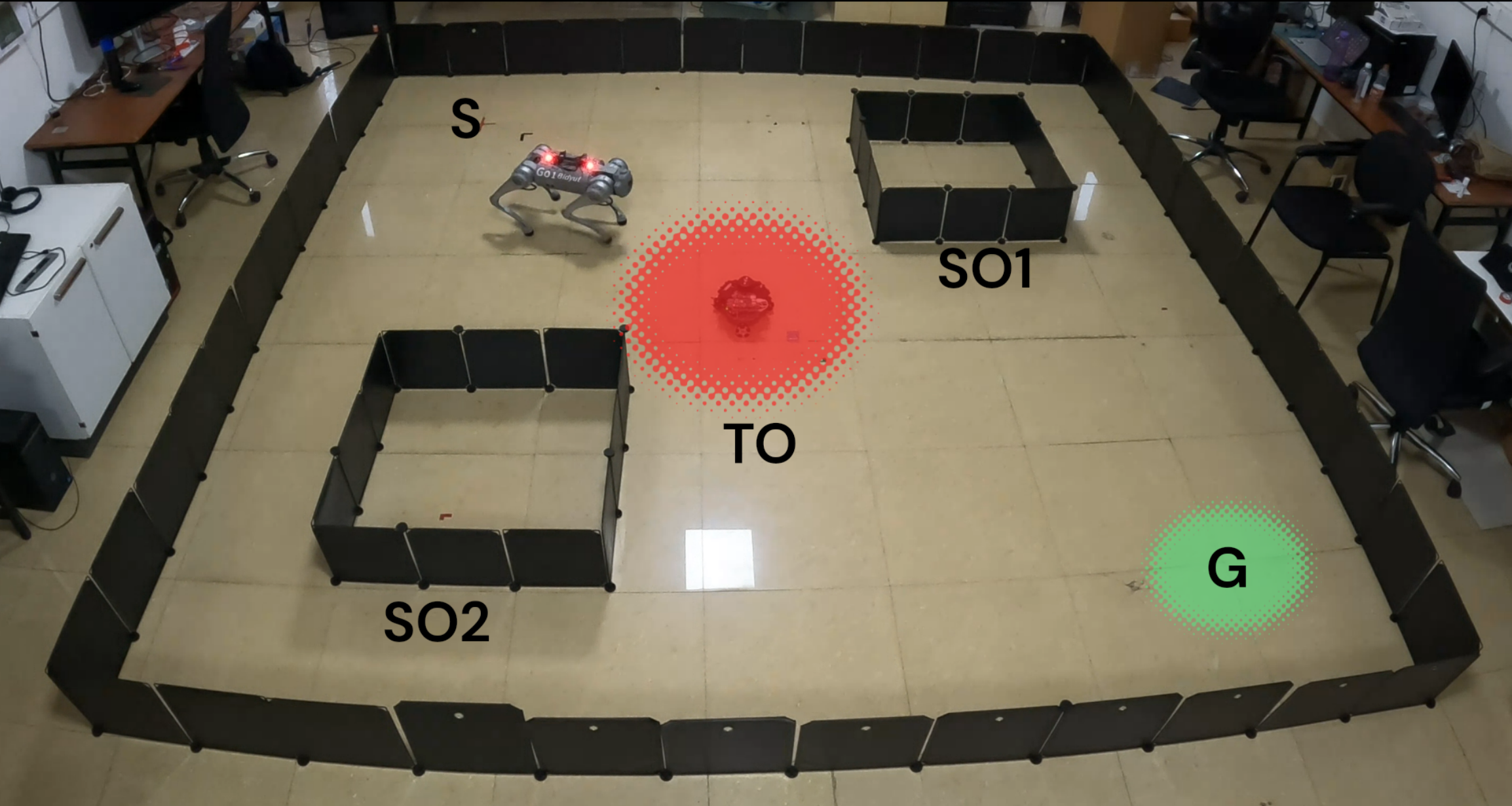}
    \caption{Experimental validation of the framework on Quadruped navigating in a workspace $W$ to reach goal region $G$ from starting position $S$, while avoiding static obstacles $SO1, SO2$ and timed obstacle $TO$ appearing between $[t_1,t_2]$
    }
    \label{fig:arena}
\end{figure}

To develop a control policy for a system with unknown dynamics, reinforcement learning (RL) emerges as a prominent approach in literature \cite{s13,s14}. In \cite{s15}, an RL agent is employed to develop a strategy enabling a robot to complete point-to-point tasks, followed by RRT for crafting a {feasible} path for the robot. In \cite{s16}, a hierarchical RL is used to build motion plans that follow the robot's dynamics and task constraints. In \cite{s20}, learning the complex motion is simplified by decomposing it into basic motion primitives. Motion primitives have been widely used to describe complex movements as a combination of sequential simple motions \cite{s21,s22}, they have been used in applications including UGVs, UAVs, and quadrupeds \cite{s20}, \cite{s24}.

To solve motion planning tasks represented using a limited class of STL specifications, \cite{s18} uses Q-learning to train the agent to achieve the assigned task. In \cite{s19}, the authors propose a funnel-based reward shaping in reinforcement learning algorithms to enforce a subset of STL specification in a tractable manner. However, a major drawback of these methods is the requirement of end-to-end retraining for any small changes in the environment or the task specifications.
% \newpage

In this paper, we address the motion planning problem for robotic systems with unknown dynamics subject to spatial, temporal, and logical constraints. In particular, the proposed approach involves a co-design strategy that integrates trajectory planning and control design to enforce STL task specifications while ensuring feasibility. To deal with unknown robot dynamics, we first choose a library of motion primitives (simple trajectories that the given robot can track) appropriate for the given robot, and use RL to learn policies that perform the motion primitives adhering to dynamics, state and input constraints. These policies are independent of the environment and specifications, eliminating the need to retrain them for different environments and specifications. To ensure that the robot's movement is accurate and it performs the given STL tasks, we need to sequentially apply the learned motion primitive policies for precise time durations. {We call the mapping between a robot's movement and time duration of policy application - a spatiotemporal relationship. We develop the reachability time estimators to obtain the spatiotemporal relationship by using a curve fitting method. Therefore, in order to obtain an STL-compliant path, we integrate learned policies and reachability estimators-based spatiotemporal information into the sampling-based trajectory planner. This planner not only creates the path nodes with position information but also determines the time component associated with each node using estimates from our trained reachability estimators}. Hence, a feasible path for the robot, along with the sequence of associated RL policies, will be produced to satisfy the given STL task. 
%Subsequently, we execute the motion plan using the policies and planner, ensuring the satisfaction of the desired STL tasks. 
% \vspace{-0.1em}
\section{Preliminaries}\label{preliminaries}
% \vspace{-0.2em}
In this paper, we discuss the navigation problem for robots whose dynamics is assumed to be unknown within environments characterized by spatial, temporal, and logical constraints, described through signal temporal logic.
\subsection{Signal Temporal Logic}
\vspace{-0.2em}
 Signal temporal logic (STL) \cite{s1}  captures high-level specifications consisting of spatial, temporal, and logical constraints. Let $x: \mathbb{R}_{\geq 0}\mapsto X\subseteq\mathbb{R}^n$ be a continuous signal and a predicate function $h: X\mapsto \mathbb{R}$ such that predicate $\mu=\mathsf{true}$, if $h(x)\geq 0$ and $\mu=\mathsf{false}$, otherwise. The STL formula can be recursively obtained as
\begin{align}\label{stl_1}
    \phi &::=\mathsf{true} | \mu | \lnot \phi | \phi_1\vee\phi_2 | \phi_1 \wedge \phi_2 | \phi_1\mathcal{U}_{[t_1,t_2]}\phi_2,
    %\lozenge_{[t_1,t_2]}\phi | \square_{[t_1,t_2]}\phi,
    % \phi &::=\varphi | \neg\varphi| \phi_1 \wedge \phi_2 | \phi_1 \vee \phi_2,\label{stl_3}
\end{align}
where $\phi_1$ and $\phi_2$ are STL formulae of form $\phi$. Boolean operators for negation, disjunction, and conjunction are denoted by $\lnot, \vee$, and $\wedge$, respectively. 
The operator $\mathcal{U}$ represents the until operator in the interval $[t_1,t_2]\subset \mathbb{R}_{\geq 0}$ with $t_1 < t_2$. The temporal operators - eventually and always can be derived as follows: $\lozenge_{[t_1,t_2]}\phi =\mathsf{true}\mathcal{U}_{[t_1,t_2]}\phi, \square_{[t_1,t_2]}\phi = \lnot\lozenge_{[t_1,t_2]}\lnot \phi$, respectively. \\
% \begin{definition}
\textbf{Robustness Metric} \cite{s2}: The satisfaction of STL formulae $\phi$ by a signal $x$ is quantified by its robustness metric $\rho^{\phi}(x,t)\in \mathbb{R}$. A signal $x$ is said to satisfy the STL specification $\phi$, denoted by $x\models\phi$, iff its robustness $\rho^\phi(x,0)\geq0$. Furthermore, robustness indicates the extent to which the specification is satisfied. For the above-mentioned STL formulae, robustness semantics can be defined as:
$\rho^\mu(x,t) = h(x(t))$, $
    \rho^{\lnot\phi}(x,t) = -\rho^\phi(x,t)$, 
    $\rho^{\phi_1\wedge \phi_2}(x,t) = \min(\rho^{\phi_1}(x,t)$, $\rho^{\phi_2}(x,t))$, $\rho^{\phi_1\vee \phi_2}( x, t) = \max(\rho^{\phi_1}(x,t), \rho^{\phi_2}(x,t))$,   
    % $\rho^{\phi_1\mathcal{U}_{[t_1,t_2]}\phi_2}(x,t) =\underset{t'\in[t_1,t_2]}{\max} \min(\rho^{\phi_1}(x,t')), \underset{t''\in[t_1,t']}{\min}(\rho^{\phi_2}(x,t''))$.
%\begin{align*}
$\rho^{\phi_1\mathcal{U}_{[t_1,t_2]}\phi_2}(x,t)
= \underset{t'\in[t_1,t_2]}{\max} \hspace{-0.5em}\min\big(
\rho^{\phi_1}(x,t'),\hspace{-0.5em}\underset{t''\in[t_1,t']}{\min}\hspace{-0.3em}(\rho^{\phi_2}(x,t'')\hspace{-0.1em})
\hspace{-0.1em}\big)$,
$
    \rho^{\square_{[t_1,t_2]}\phi}(x,t)=\underset{t'\in[t_1,t_2]}{\min} \rho^{\phi}(x,t')$, $
    \rho^{\lozenge_{[t_1,t_2]}\phi}(x,t) =\underset{t'\in[t_1,t_2]}{\max} \rho^{\phi}(x,t')$. 
%\vspace{-0.2em}
\subsection{Motion Primitive}
In this work, a motion primitive (MP) is defined as %a sequence of control actions, each characterized by a specific velocity and directional motion input.
trajectory in state space $W$ that is dependent on time $t\in \mathbb{R}_{\geq0}$, $\mathbf{M}:\mathbb{R}_{\geq 0}\mapsto W\subset\mathbb{R}^n$. Every motion primitive outputs a state from the given state space for every time instant, thus forming a simple trajectory in the space.
Decomposing and reconstructing any complex trajectory from these basic motion primitives creates a robust framework ensuring adaptability in diverse environments. As each MP corresponds to a trajectory {trackable} by the robot within its dynamic constraints, any path derived by sequentially connecting these motion primitives will ensure the feasibility of navigating the generated path. Also, learning control actions to track motion primitives instead of learning dynamics significantly reduces the training time and computation requirements, presenting a scalable solution that is adaptable to various robots.  Generating MPs for various robotic systems is not discussed in the scope of this paper as there are several ways in literature to generate MPs such as splines \cite{s33}, dynamic motion primitives \cite{s36}, probabilistic motion primitive (ProMP) \cite{s37}. Generating MPs for robotic systems such as UGVs, UAVs, and Quadrupeds is discussed in \cite{s20}, \cite{s24}.
\vspace{-0.3em}
\subsection{Reinforcement Learning} 
\vspace{-0.2em}
To create policies that result in the execution of the chosen library of motion primitives for a robot with unknown dynamics, we use reinforcement learning (RL) \cite{s13}. RL maximizes a reward signal by mapping environment states (or observations) to actions through interaction. The learning agent tries to achieve its goal by taking specific actions and sensing the change of states and their corresponding rewards. Then, it maximizes its long-term reward by selecting an optimal control policy \cite{s13}.\\
\textit{Markov Decision Process}: 
Markov decision process (MDP) is a model for sequential stochastic decision problems. It is defined by a tuple $M = (S, A, P_a, R, \gamma )$, where $S\subset \mathbb{R}^n$ is the state space, $A\subset\mathbb{R}^m$ is the action space, $P_a(s,s’) = Pr(s_{t+1} = s’|s_t = s, a_t = a)$ is the probability that the state at time $t+1$ is $s’$, given that the current state at time $t$ is $s$, and the action taken is $a$. $R(s,a)$ is the immediate reward received when action $a$ is taken in state $s$. $\gamma \in [0,1]$ is the discount factor that determines the present value of future rewards. A policy is a mapping from states to actions that specifies the action to take at a particular state. An optimal policy is one that maximizes the total discounted return while performing a task. The total discounted return is given by: $G_t = \sum_{i=0}^{\infty}\gamma^{i}R_{t+i}$, where $R_{t+i}$ is the reward received at time $t+i$.  There are various algorithms for computing an RL policy for the continuous action space. Examples include soft-actor critic \cite{sac1} and proximal policy optimization \cite{ppo}. 
\vspace{-0.3em}
\subsection{Reachability Time Estimator}
\vspace{-0.2em}
% Any control policy that is applied to a robot will result in distinct behavior depending on the time horizon of policy application. 
Any control policy that is used to achieve a desired displacement in the robot's spatial position has to be applied for a precise amount of time. The spatiotemporal relationship gives this information.  
\begin{definition}[Spatiotemporal relationship] 
We define a spatiotemporal relationship $g_\pi$ as the relationship between the displacement of a robot $d$ when a policy $\pi$ is applied and the time horizon $t_\pi$ for which it is applied.
    \begin{align*}
        t_\pi = g_\pi(d), 
    \end{align*}
    where $g_\pi$ is a function that maps the distance covered to time when policy $\pi$ is implemented.
    % , where $t_\pi\in \mathbb{R}_{\geq 0} $ is the time horizon for which policy $\pi$ should be applied to the robot to cover a distance $d_\pi$.
\end{definition}
% Spatiotemporal relationship gives the information of the affect of learned policies on the robot's displacement when they are applied for different time intervals. 

Since this spatiotemporal relationship is unknown, we need to create a mapping function that connects the spatial and temporal features of these policies. %In order to learn the map $g$, we use Gaussian process (GP) \cite{williams2006gaussian}, a non-parametric regression method to approximate an unknown continuous non-linear map. These Gaussian process models are the reachability time estimators.

%$g_\pi: X \mapsto \mathbb{R}_{\geq 0}$, where $X\subset\mathbb{R}$. $g$ is the spatiotemporal relationship with component $g_\pi(x) \sim GP(m(x), k(x,x’))$, where $x, x' \in X$, $m : X\mapsto \mathbb{R}$ is a mean function, and $k: X \times X \mapsto \mathbb{R}_{\geq0}$ is a covariance kernel function representing a measure of similarity between two states. The choice of kernels is problem-dependent, the commonly used kernels include the linear, squared exponential, and Mat'ern kernels \cite{williams2006gaussian}.
% Given the dataset containing $P$ measurements of inputs (displacement values) $\{x^{(1)}, ..., x^{(P)} \}$, $x^{(i)}\in X$ and corresponding noisy outputs (time horizons) $\{y^{(1)}, y^{(2)}, ..., y^{(P)} \}$, $y^{(i)}=g_\pi(x^{(i)})+w^{(i)}$, where $w^{(i)}$ is noise with a normal distribution of zero mean and standard deviation of $\sigma_f>0$, the posterior distribution corresponding to $g_\pi(x)$, at an arbitrary state $x\in X$ is given by a normal distribution with mean $\mu(x)= \Bar{k}^T{(K + \sigma^2_f\mathbb{I}_P)}^{-1}y$ and covariance $\sigma^2(x) = k(x,x)-\Bar{k}^T{(K + \sigma^2_f\mathbb{I}_P)}^{-1}\Bar{k}$, where
% where $\bar{k} = [k(x^{(1)},x),..., k(x^{(P)},x)]^T$, $y =[y^{(1)},..., y^{(P)}]^T$, and 
% \begin{align*}
%     K&=\begin{bmatrix}
%         k(x^{(1)},x^{(1)}), &\ldots &k(x^{(1)},x^{(P)})\\
%         \vdots &\ddots &\vdots\\
%         k(x^{(P)},x^{(1)}), &\ldots &k(x^{(P)},x^{(P)})
%     \end{bmatrix}.
% \end{align*}

% \VJ{removed the GP details from the preliminary}
\section{Problem Statement}\label{problem_statement}
\vspace{-0.1em}
 Let $\mathcal{Q}\subset\mathbb{R}^m\times\mathbb{R}_{\geq 0}$ be a configuration space defined as $\mathcal{Q} = W \times T$, where $W\subset\mathbb{R}^m$ is a $m$ dimensional workspace and $T\subset\mathbb{R}_{\geq 0}$ indicates time interval. A tree in the configuration space $\mathcal{Q}$ is represented by a graph $G = (\mathcal{N},E)$, where set of nodes is denoted by $\mathcal{N} \subset \mathcal{Q}$ and the $i^{th}$ node is represented by $n_i=(x_{1_i},x_{2_i},...,x_{m_i},t_i)\in \mathcal{N}$, where $(x_1,x_2,..,x_m)\in W$ and $t_i\in T $, and $E$ is the set of edges, $e_{ij}=(n_i,n_j) \in E$ connecting two nodes $n_i, n_j$ in the graph. The initial configuration of the robot is known in $\mathcal{Q}$, and it will be considered as the root node of the tree $n_0 = (x_{1_0},x_{2_0},..., x_{m_0},t_0)$. For $n_k\in\mathcal N$, the trajectory from root node $n_0$ till node $n_k\in\mathcal{N}$ is defined by $\mathcal{T}_{n_k}=(n_0,n_1,\ldots,n_k)$ such that $(n_i,n_{i+1})\in E$ for all $i\in\{0,\ldots,k-1\}$.

The aim of this work is to find a motion plan or trajectory along with an associated control policy that ensures the given STL specification $\phi$ with maximum robustness. Let $\mathcal{T} = \{\mathcal{T}_{n_k} \vert {\rho^{\phi}(\mathcal{T}_{n_k})}\geq 0\}$
is the set of all feasible trajectories for an STL specification $\phi$,
where $\mathcal{T}_{n_k}\models\phi$
and $\overline{\mathcal{T}}$ be the trajectory with maximum robustness defined as $\overline{\mathcal{T}}=\argmax_{\mathcal{T}_{n_k}\in \mathcal{T}}(\rho^\phi(\mathcal{T}_{n_k}))$. 
% \end{lemma}
\begin{problem}\label{problem}
Given a robotic system $\mathcal{R}$ with unknown dynamics operating in workspace $W$, and an STL specification $\phi$ of the form \eqref{stl_1}, develop a control strategy and obtain a robust and feasible trajectory $\overline{\mathcal{T}}$ to satisfy $\phi$. 
\end{problem}
We address \text{Problem 1} with a framework %illustrated in Figure \ref{schematic}, 
consisting of two main components: $(i)$ Learning spatiotemporal motion primitives and $(ii)$ STL-compliant motion planning, elaborated in the following section.
\section{Proposed Framework}\label{Methodology}
\vspace{-0.2em}
This section presents the proposed STL-compliant motion planning framework solving Problem \ref{problem}. 
% \begin{figure}
%     \centering
%     % \includegraphics{}
%     \includegraphics[scale=0.7]{mp_3.pdf}
%     \caption{Proposed STL-compliant motion planning framework}
%     \label{schematic}
% \end{figure}

\subsection{Learning Spatiotemporal Motion Primitives}
\subsubsection{Generating Motion Primitives Library}
Consider a set of $N$ motion primitive classes, $\mathcal{M} = \{\mathbf{M}_1, \mathbf{M}_2, ..., \mathbf{M}_N\}$, each $\mathbf{M}_i$ is a collection of motion primitives characterizing a simple trajectory in the workspace. For example, a set of MPs {represents} straight-line trajectories tracked with distinct maximum velocities that the robot can attain. Such a set of motion primitives forms a motion primitive class. 
% Consider a set of $N$ motion primitive classes, $\mathcal{M} = \{\mathbf{M}_1, \mathbf{M}_2, ..., \mathbf{M}_N\}$, each $\mathbf{M}_i$ is a collection of motion primitives characterizing a simple trajectory in the workspace. For example, if a straight-line trajectory starting at a particular pose forms a motion primitive class, each motion primitive in it can be executed at different velocities. 

Therefore, to track a trajectory with certain velocities, distinct policies can be designed for each $\mathbf{M}_N$ in $\mathcal{M}$. 
In our work, we develop a control policy set $\xi_{\mathbf{M}_i} =\{\pi_{i1}, \pi_{i2},..., \pi_{i{p_i}}\}$ corresponding to the class $\mathbf{M}_i$, that ensures that the MPs are tracked. Here, $p_i$ represents the number of policies developed for $\mathbf{M}_i$, and $\pi_{ij}$, $j\in \{1,...,p_i\}$ are the control policies associated with class $\mathbf{M}_i$ and characterized by a specific velocity (Line 2 of Algorithm \ref{algorithm}).
% In our work, we aim to develop a control policy set corresponding to class $\mathbf{M}_i$,  $\xi_{\mathbf{M}_i} =\{\pi_{i1}, \pi_{i2},..., \pi_{i{p_i}}\}$, where $p_i$ represents the number of policies for $\mathbf{M}_i$, and $\pi_{ij}$, $j\in \{1,...,p_i\}$ are the motion primitives associated with class $\mathbf{M}_i$ and characterized by a specific velocity and directional motion inputs (Line 2 of Algorithm \ref{algorithm}). 
%Furthermore, the policies are re-usable. When we apply the developed control policies to the robot, irrespective of robot's pose in the environment, it behaves uniformly in every pose. 

The set of all controllers $\xi=\bigcup_{i=1}^{N}\xi_{\mathbf{M}_i}$ will be utilized to synthesize the control strategy (line 6 of Algorithm \ref{algorithm}). Given a system with unknown dynamics, we leverage RL for policy development. RL training involves a state set $S\subset \mathbb{R}^n$, an action space $A\subset\mathbb{R}^m$ and 
%Rewards for each policy are designed in such a way that the system performs the sub-task corresponding to the MP, and penalizes the action if the robot exhibits any other behaviour. 
the reward structure $R_{ij}: S\times A\mapsto \mathbb{R}$ that helps to guide the system to perform the corresponding motion primitive by learning RL policy $\pi_{ij}$ and penalizes deviations from it. Upon completing the RL training for all motion primitives, we obtain the policy set $\xi$.

\subsubsection{Reachability Time Estimator}
The amount of robot displacement, whenever the generated RL policies $\xi$ are applied for different time intervals, varies. 
% This spatiotemporal relationship ($t_\pi = g_\pi(d)$) is required to implement these policies on the robot and determine its motion in any environment.
To meet the STL specification for a robotic system, we design reachability time estimators that find the spatiotemporal relationship $g_\pi(.)$ that our current RL policies do not capture.
Due to the non-linearities in system dynamics and the environment, function $g_\pi$ can be non-linear. To address this, we propose using a curve-fitting estimator for each policy in the set $\xi$. These estimators map the relative change in pose to the time horizon for which policies need to be applied. % by utilizing the Gaussian process regression. 

We apply each policy to the robot over an increasing time horizon, for example, 0.1, 0.2,...,  50 seconds. Then, we record the robot's pose changes corresponding to the time horizons. This data is used to train the curve-fitting model $g_{ij}$, with a change in the pose as input and a time horizon as output (Algorithm \ref{algorithm}, lines 3-5).
The final set of reachability estimators is $\mathcal{G}=\cup_{i=1}^N\cup_{j=1}^{p_i}g_{ij}$. These estimators, along with the motion primitive library, capture spatial and temporal data of the control policies for the robot.

\subsection{STL-compliant Motion Planner}
In this subsection, we propose a sampling-based motion planning algorithm built upon \cite{rrt*} that incrementally constructs a tree exploring the configuration space $\mathcal{Q} = W \times T$ to fulfil a specified STL specification $\phi$ in \eqref{stl_1}. We utilize the set of control policies $\xi$ and the spatio-temporal functions $f$ in the reachability time estimator $\mathcal{G}$ to generate feasible paths for a given robot that satisfy $\phi$. 

Next, we explain Algorithm \ref{path_planner} briefly. We start by fixing the root of the tree at the robot's initial state and initialize the set of the nodes $\mathcal{N}$ with $n_0 = (x_{1_0},x_{2_0},..., x_{m_0},t_0)$, and assign the cost of the $n_k$th node as
%\begin{equation}
%    J_{\phi}(n_k)=  - %{\rho}^{\phi}(\mathcal{T}_{n_k}).
%    \label{cost_fn}
%\end{equation}
\begin{equation}
    J_{\phi}(n_{k}) = \sum_{n=n_{0}}^{n_{k}} - {\rho}^{\phi}(\mathcal{T}_n). \label{cost_fn}
\end{equation}

{The trajectory joining the root node $n_0$ to node $n_k$ is discrete $\mathcal{T}_{n_k}$, and the robustness $\rho^\phi(\mathcal{T}_{n_k})$ of the trajectory is derived using the robustness measure as described in \cite{s9}}.
 % Here, robustness $\rho^\phi(\mathcal{T}_{n_k})$ of a trajectory $\mathcal{T}_{n_k}$, which joins root node $n_0$ to node $n_k$ is derived using the robustness measure as described in \cite{s9}. 
 
 % \VJ{One review mentioned that reference [7] deals with discrete-time signals, that's why this sentence is slightly changed}
The tree $G = (\mathcal{N},E)$ grows by adding nodes and corresponding edges to sets $\mathcal{N}$ and $E$ respectively in each iteration $i$ (see Algorithm \ref{path_planner}).
Let $w_{i} = (x_{1_i},x_{2_i},...,x_{m_i})$, we use uniform random sampling to sample a new point $w_{rnd}$ from $W$. The previously generated RL policies are trained to move the robot for a minimum distance $d_{min}$, and a maximum distance $d_{max}$. Additionally, the reachability time estimators are trained with data obtained after applying these policies. Therefore, the time horizon that the policies have to be applied for, can be predicted with high accuracy any distance between $[d_{min}, d_{max}]$.
Let the nearest point of $w_{rnd}$ be $w_{nearest}$ in $W$. $w_{rnd}$ is steered towards $w_{nearest}$ such that its distance from $w_{nearest}$ is between $d_{min}$ and $d_{max}$. This will ensure the learned policies can track the path between these two nodes. The extended position of the $w_{rnd}$ is updated to $ w_{new}$ (lines 3-5). We now need the temporal component $t_{new}$ (the time it takes to reach from $n_0$ to $n_{new}$) to build the new node $n_{new}$. Let the corresponding node for $w_{nearest}$ in $Q$ be $n_{parent}$, which is temporarily assigned as the parent to $n_{new}$. We randomly select a sequence of policies $\mathcal{P}$ from $\xi$ that enable the robot to reach $w_{new}$ from $w_{nearest}$, and predict the time required ($t_{\mathcal{P}} = \sum_{\mu\in \mathcal{P}} g_\mu(dist(w_{nearest}, w_{new})$) to travel from $w_{nearest}$ to $w_{new}$ using reachability estimators from the set $\mathcal{G}$. Then, time taken to reach $n_{new}$ from root node $n_0$ is the sum of time taken to reach $w_{nearest}$ and $t_\mathcal{P}$, i.e., $t_{new} = t_{parent}+t_\mathcal{P}$. Once the sequence of policies, and $t_{new}$ is computed, we check if the edge joining the $n_{parent}$ to $n_{new}$ is collision-free in $\mathcal{Q}$. If the edge is collision-free, the new node $n_{new}$ is appended to $\mathcal{N}$ and the edge $e_{parent,new} = (n_{parent},n_{new})$ to $E$ (lines 6 to 12), and update the tree $G$. The cost of $n_{new}$ is calculated according to \eqref{cost_fn}, which ensures that the path generated maximizes the robustness of $\phi$. Each edge will have a corresponding sequence of policies applied to progress from parent to child node, along with the duration of application. This structured approach ensures that our plan meets $\phi$ and adheres to the dynamic constraints of the robot.
To optimize the constructed tree, the tree is rewired similar to \cite{rrt*}. Upon adding a new node to the tree, we analyze its neighbouring nodes within $[d_{min},d_{max}]$ range. For each neighbouring node, we evaluate whether adopting the $n_{new}$ as a new parent would result in a lower overall cost from the root node. If a more cost-effective and collision-free path exists, we rewire the parent-child relationships within the tree accordingly (Lines 13-18). 

Finally, after the tree-building process is complete, the trajectories are examined based on their robustness, and the trajectory with the highest robustness, $\overline{\mathcal{T}}$ is returned along with the sequence of policies associated with it (lines 19-23). 
In the tree-building's early stages, before a full path is built, robustness is assessed based on only the currently relevant segments of the STL specification that are active in time. When a full path is generated, robustness evaluation extends across the entire STL specification. Initially, this strategy yields a spatio-temporally feasible path, which is then optimized to maximize the robustness throughout the entire signal by repeated sampling of nodes. The proposed solution is probabilistically complete as the number of sampled nodes tends to infinity.

% \small
\begin{algorithm}[t]
\caption{Algorithm for STL-Compliant framework}
\label{algorithm}
\begin{algorithmic}[1]
    \Require $\mathcal{R}$, $W$, $\phi$, $\mathcal{M}$
    \For {$\mathbf{M}_i$ in $\mathcal{M}$}
        \State $\xi_{\mathbf{M}_i} \leftarrow $\textsc{GeneratePolicies}$(\mathcal{R}, W, \mathbf{M}_i)$
        \For{$\pi_{ij} $ in $\xi_{\mathbf{M}_i}$}
            \State Obtain sampled dataset $S_{ij} $
            \State $g_{ij}\leftarrow $ \textsc{ReachabilityEstimator}$(S_{ij})$
        \EndFor
        % \State $\xi_i = \cup_{j=1}^{p_i}\pi_{ij}$
        % \State $G_i = \cup_{j=1}^{p_i}G_{ij}$
    \EndFor
    % \State $\xi = \cup_{i=1}^{N}\xi_i$
    % \State $G = \cup_{i=1}^{N}G_i$
    \State $\xi = \cup_{i=1}^{N}\cup_{j=1}^{p_i}\pi_{ij}$
    \State $\mathcal{G} = \cup_{i=1}^{N}\cup_{j=1}^{p_i}g_{ij}$
    \State $d_{min},d_{max} \leftarrow$ \textsc{ComputeDistanceRange}$(\xi, \mathcal{G})$
    % \State $\underline{d},\overline{d} \leftarrow$ \textsc{ComputeDistanceRange}$(\xi, \mathcal{G})$
    \State $\overline{\mathcal{T}}, \mathcal{P} \leftarrow $\textsc{STLTrajectoryPlanner}$(W, \phi, \mathcal{G}, d_{min},d_{max})$
    \For{$node$ in $\overline{\mathcal{T}}$}
        \State \textsc{MotionPlanExecution}$(\mathcal{R}, W, node,\xi, \mathcal{P},\mathcal{G})$
        \If{\textsc{ErrorInTracking} is True}
            \State $\overline{\mathcal{T}}, \mathcal{P} \leftarrow $ \textsc{STLTrajectoryPlanner}$(W, \phi, \mathcal{G}, d_{min},d_{max})$
        \EndIf
    \EndFor
\end{algorithmic}
  \end{algorithm}

\subsection{Execution of the Motion Plan}
\vspace{-0.3em}
% The STL-compliant path planner uses the reachability estimator to generate the path represented with nodes
In order to execute the generated motion plan, the robot traverses from the current position at node $n_i$ to the next node $n_{i+1}$ of $\overline{\mathcal{T}}$ within $\Delta t=t_{i+1}-t_i$, using the sequence of policies ($\mathcal{P}$) returned by the motion planner. We ensure that small errors in tracking the planned path is resolved using policies from the set $\xi$ in closed-loop control.
However, if the robot deviates significantly from the preplanned path during execution, the path is re-planned, accounting for unforeseen uncertainties such as changes in the environment, etc. For the replanning phase, the robot's current position, and current time instant forms the root node of the new tree. Furthermore, we design the new tree as per Algorithm \ref{path_planner}, explained in Section \ref{Methodology}(B).
Consequently, we update the control strategy to track the new planned path (Line 10-13 of Algorithm \ref{algorithm}).

% \normalsize
\begin{theorem}
    {Given a robotic system $R$ in a workspace and STL specification $\phi$, if the proposed STL trajectory planner generates a trajectory $(\overline{\mathcal{T}})$ and policy sequence $(\mathcal{P})$ (line 9, 13 of Algorithm \ref{algorithm}), then it ensures feasibility and STL satisfaction.}
\end{theorem}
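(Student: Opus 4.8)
The plan is to separate the claim into two properties of the returned path $\overline{\mathcal{T}}=(n_0,n_1,\dots,n_k)$: \emph{feasibility} --- that $\overline{\mathcal{T}}$ together with the policy sequence $\mathcal{P}$ can be tracked by $\mathcal{R}$ within its (unknown) dynamics and its state/input constraints while avoiding every static and timed obstacle --- and \emph{STL satisfaction} --- that $\rho^{\phi}(\overline{\mathcal{T}})\ge 0$, i.e.\ $\overline{\mathcal{T}}\models\phi$. I would prove both by induction along the edges of $\overline{\mathcal{T}}$, with the base case supplied by the root node $n_0$, which is the known initial configuration of the robot.

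For feasibility, I would take an arbitrary edge $e_{i,i+1}=(n_i,n_{i+1})$ of $\overline{\mathcal{T}}$ and use three facts from Section~\ref{Methodology}. First, the steering step forces the spatial gap $\mathrm{dist}(w_i,w_{i+1})$ to lie in $[d_{min},d_{max}]$, and these are exactly the displacement bounds for which the policy library $\xi$ was trained; hence a policy sequence $\mathcal{P}_i\subseteq\xi$ exists that drives $\mathcal{R}$ from $w_i$ to $w_{i+1}$, and because each $\pi\in\xi$ is an RL policy tracking a motion primitive in $\mathcal{M}$ --- which by definition is a trajectory trackable by $\mathcal{R}$ under its dynamic and state/input constraints --- the motion along $e_{i,i+1}$ is constraint-admissible. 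Second, the reachability estimator $g_\mu$ is fitted on data gathered over precisely $[d_{min},d_{max}]$, so the predicted horizon $t_{\mathcal{P}_i}=\sum_{\mu\in\mathcal{P}_i}g_\mu(\mathrm{dist}(w_i,w_{i+1}))$ agrees with the real execution time, making the temporal label $t_{i+1}=t_i+t_{\mathcal{P}_i}$ consistent with the actual motion. Third, the edge is admitted only after the collision test in $\mathcal{Q}=W\times T$, which simultaneously certifies avoidance of static obstacles and of timed obstacles active on $[t_i,t_{i+1}]$. Concatenating over the $k$ edges then yields a dynamically feasible, time-consistent, collision-free execution realised by $\mathcal{P}=\bigcup_i\mathcal{P}_i$.

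For STL satisfaction, I would invoke the cost bookkeeping: each node stores $J_\phi(n_k)=\sum_{n=n_0}^{n_k}-\rho^{\phi}(\mathcal{T}_n)$ from~\eqref{cost_fn}, so the RRT*-style rewiring that minimises path cost is exactly maximisation of robustness along the path, and by construction the planner returns $\overline{\mathcal{T}}=\argmax_{\mathcal{T}_{n_k}\in\mathcal{T}}\rho^{\phi}(\mathcal{T}_{n_k})$ with $\mathcal{T}=\{\mathcal{T}_{n_k}\mid\rho^{\phi}(\mathcal{T}_{n_k})\ge 0\}$. The hypothesis of the theorem --- that the planner actually \emph{generates} a trajectory --- is precisely the statement that $\mathcal{T}\neq\emptyset$, so the maximiser has $\rho^{\phi}(\overline{\mathcal{T}})\ge 0$, i.e.\ $\overline{\mathcal{T}}\models\phi$. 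I would add the caveat that during the incremental phase only the temporally active subformulae of $\phi$ are scored whereas the returned path is scored against all of $\phi$, so the $\argmax$ at termination ranges over correctly evaluated robustness values; the recursive, node-wise robustness measure of~\cite{s9} is what makes $\rho^{\phi}(\mathcal{T}_{n_k})$ well defined on a discrete trajectory. Finally, since on a tracking failure the planner is re-invoked with the robot's current pose and time as the new root (line 13 of Algorithm~\ref{algorithm}), the same two arguments re-establish feasibility and satisfaction from that point, closing the induction over execution.

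The step I expect to be the main obstacle is making the feasibility half genuinely rigorous rather than heuristic: the RL policies and the curve-fitted reachability estimators are only approximately correct, so honestly one can prove feasibility only \emph{up to a bounded tracking/estimation error}, and one must then argue that closed-loop re-application of policies from $\xi$ (and replanning when the deviation is large) keeps the realised trajectory inside the tube around $\overline{\mathcal{T}}$ on which the collision checks and the robustness evaluation were carried out --- otherwise the discrete certificate does not transfer to the executed motion. A related, smaller gap is discrete-versus-continuous: robustness is evaluated only at the sampled waypoints, so one also needs the waypoint spacing induced by $d_{max}$ to be fine enough that non-negativity of the waypoint robustness implies non-negativity of the continuous-signal robustness of $\phi$.
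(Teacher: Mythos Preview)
Your proposal is correct and follows essentially the same two-part structure as the paper's proof: feasibility is argued from the $[d_{min},d_{max}]$ steering constraint together with the reachability estimators fixing the temporal component, and STL satisfaction from the fact that $\overline{\mathcal{T}}$ is selected as the $\argmax$ over trajectories with non-negative robustness. Your version is considerably more thorough --- the paper's proof is three sentences and does not mention the collision check, the induction over edges, or the approximation caveats you raise --- so the gaps you flag (bounded tracking/estimation error, discrete-versus-continuous robustness) are real but are simply not addressed in the original either.
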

\begin{proof}
     RL policies $\xi$ and reachability estimators $\mathcal{G}$ are designed for the given robotic system. The node sampling in trajectory generation is based on the minimum ($d_{min}$) and maximum ($d_{max}$) distance that the generated policies can cover. Furthermore, the temporal component of each node is assigned according to the selected policy $\pi_{ij}\in\xi$ and the corresponding reachability estimator $g_{ij}\in\mathcal{G}$. This ensures the feasibility of the trajectory and the policy sequence. Finally, the trajectory with maximum positive robustness ($\overline{\mathcal{T}}$, Line 22 of Algorithm \ref{path_planner}) is selected, ensuring that the STL specification is satisfied. 
\end{proof}
\vspace{-0.3em}
% \small
\begin{algorithm}
\caption{STLTrajectoryPlanner($W, \phi, \xi, \mathcal{G}, d_{min}, d_{max}  $)}
\label{path_planner}
\begin{algorithmic}[1]
    \Require $W$, $\phi$, $\xi$, $\mathcal{G}, $$d_{min}$, $d_{max}$
    \Ensure $\overline{\mathcal{T}}, \mathcal{P}$
    \State $\mathcal{N}\leftarrow \{n_0\}, E\leftarrow\{\}, \mathcal{T}\leftarrow\{\}, \mathcal{P}\leftarrow \{\}$
    \For{$i\leq max\_iteration$}
        \State $w_{rnd}\leftarrow $ Sample $W_{free}$
        \State $w_{nearest}\leftarrow \textsc{Nearest}$$(G^W, w_{rnd})$
        \State $w_{new}\leftarrow \textsc{Steer}$$(w_{rnd}, w_{nearest}, [d_{min},d_{max}])$
        \State $parent(n_{new})\leftarrow n_{parent}$
        \State $t_{new}, temp\leftarrow \textsc{SamplePolicies}(\mathcal{G}, n_{parent},n_{new})$
        \State $n_{new}\leftarrow\textsc{Assign}(t_{new}, w_{new})$
        \If{$\textsc{CollisionFree}(n_{parent},n_{new})$}
            \State $\mathcal{N}\leftarrow\mathcal{N}\cup \{n_{new}\}$
            \State $E\leftarrow E\cup \{(n_{parent},n_{new})\}$
            \State $\mathcal{P}((n_{parent},n_{new}))\leftarrow temp$
        \EndIf
            \For{ each $n_{near}\in\mathcal{N}_{kNear}$}
                \State $t_{near}\leftarrow\textsc{SamplePolicies}(\mathcal{G}, n_{parent},n_{new})$
                \If{$\textsc{CollisionFree}(n_{near},n_{new})$ and $(J_\phi(n_{new}) < J_\phi(parent(n_{near})))$}
                \State $E\leftarrow E\setminus\{parent(n_{near}), n_{near}\}\cup\{n_{new},n_{near}\}$
                \State $\mathcal{P}((n_{new},n_{near}))\leftarrow temp$
                \EndIf
            \EndFor
            
            % \State Extract $T_{feasible}$ from $G$ s.t. $\rho^\phi(t_f\in \mathcal{T}_{feasible})\geq 0$
            % \State $\overline{\mathcal{T}}=\argmax_{\mathcal{T}_{feassible}}(\rho^\phi(\mathcal{T}_{feasible})) $
    \State $i\leftarrow i+1$
    \EndFor
    \For{$n_k\in \mathcal{N}$}
        \If{$\rho^\phi(\mathcal{T}_{n_k})\geq 0$}
            \State $\mathcal{T}=\mathcal{T}\cup\{\mathcal{T}_{n_k}\}$
        \EndIf
    \EndFor
    \State $\overline{\mathcal{T}}= \argmax_{\mathcal{T}_{n_k}\in \mathcal{T}}(\rho^\phi(\mathcal{T}_{n_k}))$
    \State \textbf{return} $\overline{\mathcal{T}}, \mathcal{P}$
\end{algorithmic}
\end{algorithm}
% STL-compliant path planner considers the reachability estimator to generate the path represented with nodes. To traverse from the first node $n_0 (x, y)$ to the next node $n_{1} (x, y)$ within $\Delta t=t_1-t_0$ time, we select the sequence of policies from  set $\xi$ such that there is a minimum number of switching between policies and the robot moves to the node $n_{1}$ in time $\Delta t$, with some small allowable error between the robot's actual position in space-time and the node $n_{1}$.
% % We allowed a small error tolerance for the robot's actual position in space-time $(x,y,t)$ and the node $n_{1}$, given by $(\delta_x, \delta_y, \delta_t)=(|x-x_{1}|, |y-y_{1}|, |t-t_{1}|)$. The error tolerance is given by $\epsilon\in \mathbb{R}^3_{\geq0}$, such that $(\delta_x, \delta_y, \delta_t)<\epsilon$. 
% To move towards next node $n_2$, we compute the sequence of policies that enable the robot to move from its current position in space-time to $n_2$.
% % Similarly, we proceed with the information available on the planned path and time reachability estimator for subsequent nodes in the path and develop the control strategy for navigation. 
% However, if the robot deviates significantly higher than the allowable tolerance from the preplanned path during execution, the path is re-planned, accounting for unforeseen uncertainties such as changes in the environment, disturbances, etc. Consequently, we update the control strategy to track the new planned path.

\vspace{-0.2em}
\section{Results And Discussion}\label{results}
% \vspace{-0.2em}
We validate our framework on differential drive mobile robots and quadruped across various environments. Training and simulations are conducted on a computer equipped with an AMD Ryzen 9 5950X, 128 GB DDR5 RAM, and NVIDIA RTX 3060Ti 32 GB GDDR6. The experimental system operates on Ubuntu 20.04 with ROS Noetic distribution.
% The x- and y- coordinates for robot $R$ is considered as $(x,y)$, for static obstacle $SO_i$ as $(o_{i_x}, o_{i_y})$, for timed obstacle $TO$ as $(o_{t_x},o_t_y)$, and goal $G_i$ as $(x_{g_i},y_{g_i})$. The distance between two entities $A$ and $B$ is denoted by $d(A,B)$. 

% \VJ{made changes here regarding how we denote robot's x-y position, obstacle, and goal positions. }
%We denote robot's $x,y$ coordinate in workspace $W$ as $\mathbf{x}$, $i^{th}$ goal location as $\mathbf{g_i}$, position of $i^{th}$ static obstacle as $\mathbf{o_i}$, and that of time dependent obstacle as $\mathbf{o_t}$. For vectors $\mathbf{a}, \mathbf{b}\in \mathbb{R}^n$, $\mathbf{a} \leq \mathbf{b}$ denotes $\mathbf{a}_i\leq \mathbf{b}_i, \forall i\in\{1,...,n\}$.
\vspace{-0.3em}
\subsection{Case study 1 - Turtlebot3}
We validated the proposed framework by conducting a real-world experiment on the Turtlebot3 (a differential drive robot) that has to navigate in a 2D workspace $W=[0,3]\times[0,3]$ to enforce the following STL specification:
% \begin{align}\label{task_turtlebot1}
%     \phi_1 = & \Box_{[0,55]} (0\leq x \leq 3, 0\leq y\leq 3) \wedge \nonumber\\
%     &\lozenge_{[20,30]}(\|(x,y)-(2.5, 0.5)\|_2\leq 0.25)\wedge \nonumber\\
%     &\lozenge_{[50,55]}(\|(x,y)-(0.5, 2.5)\|_2\leq 0.25)\wedge \nonumber\\
%     &\Box_{[0, 30]} (\|(x,y)-(1.5, 1.5)\|_2>0.5) \wedge \nonumber\\
%     & \Box_{[0,55]} ( \|(x, y) - (o_{i_x}, o_{i_y})\|_2 > 0.4), \forall i=\{1,2\},
% \end{align}
% where $\mathbf{x}=(x,y)$ is the robot's $x,y$ coordinate in $W$.
% \\
\begin{align}\label{task_turtlebot1}
    \phi_1 &=  \Box_{[0,50]} ([0,0]\leq\mathbf{x}\leq[3,3]) \wedge \lozenge_{[20,25]}(\|\mathbf{x}-\mathbf{g_1}\|_2\leq \nonumber\\0.25) 
    &\wedge \lozenge_{[45,50]}(\|\mathbf{x}-\mathbf{g_2}\|_2\leq 0.25)\wedge \Box_{[0, 30]} (\|\mathbf{x}-\mathbf{o_t}\|_2>\nonumber\\0.5)  
    & \wedge\Box_{[0,55]} ( \|\mathbf{x} - \mathbf{o_i}\|_2 > 0.4), \forall i=\{1,2\},
\end{align}
where $\mathbf{x}=(x,y)$ is the robot's position.
To meet the STL specification $\phi_1$, starting anywhere in $W$ at time $t=0$, Turtlebot3 should eventually reach $\mathbf{g_1}=(2.5,0.5)$ between 20-25 seconds, and then reach $\mathbf{g_2}=(0.5,2.5)$ between 45-50 seconds, while always maintaining a safe distance from the static obstacles at $\mathbf{o_1}=(1.5,0.3)$ and $\mathbf{o_2}=(1.5,2.4)$, the timed obstacle TO that appears between 0- 30 seconds interval centred at $\mathbf{o_t}=(1.5,1.5)$, and the robot must always remain inside $W$ for 50 seconds. The environment is shown in Fig. \ref{turtlebot3}(a)-(b).
We consider that Turtlebot3 can perform two fundamental motions: translational and rotational. The set of motion primitives classes is defined as $\mathcal{M} = \{\mathbf{M}_1, \mathbf{M}_2, \mathbf{M}_3, \mathbf{M}_4\} = \{Clockwise, Counterclockwise, Forward, Backward\}$. We then learned RL policies $\xi_{\mathbf{M}_i}$, considering that each motion can be attained at various velocities. The set of RL policies $\xi = \bigcup_{i=1}^4\xi_{\mathbf{M}_i}$ are trained that corresponds to $\mathcal{M}$. 

\textit{Training RL Policies:} To learn RL policies (Section \ref{preliminaries}), we discuss the procedure for rotational motion primitives $\mathbf{M}_1$ and $\mathbf{M}_2$. For rotational motion primitives, we consider state space $S$ with states $\Delta d$ and $\omega$ representing the displacement of the robot from its initial position and the angular velocity of the robot, respectively, and $v_{l}$ and $v_{r}$ are robot's left and right wheel velocities (in radians per second) representing actions. The reward function is formulated to ensure the robot rotates in the correct direction with angular velocity $|\omega| \leq |\omega_{max_{ij}}|$ for $i^{th}$ MP and $j^{th}$ policy, and penalizes any displacement from its initial position as defined in \eqref{rew_rotn}. 
    \begin{figure*}[t!]
  % \begin{subfigure}{.24\linewidth}
  %   \includegraphics[width=\linewidth]{rotational_gp.jpg}%
  %   \caption{}%
  % \end{subfigure}\hfil
  \begin{subfigure}{.23\linewidth}
    \includegraphics[width=\linewidth]{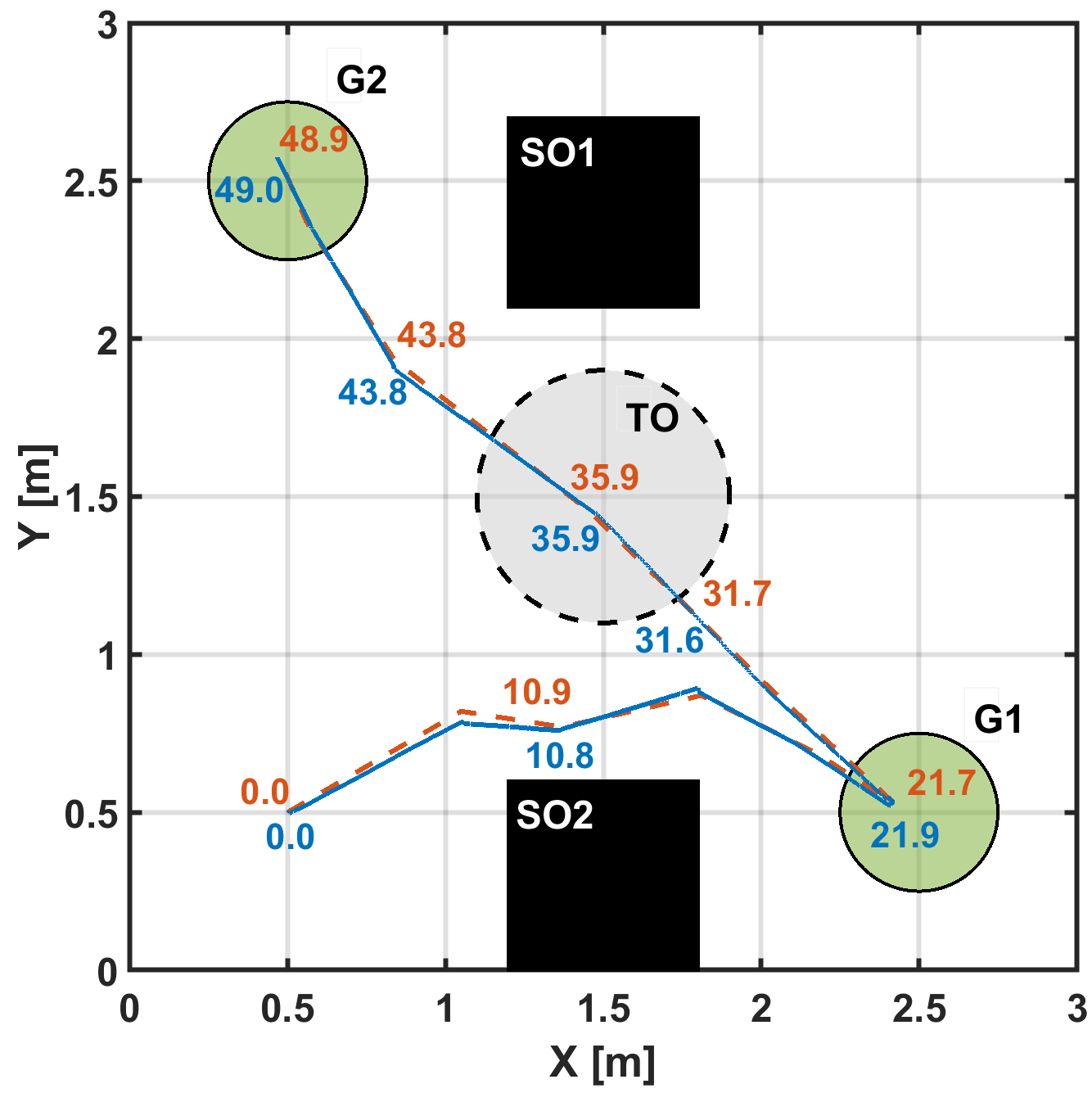}% We can add the turtlebot plot 1 here 
    \caption{}%
  \end{subfigure}\hfil
  \begin{subfigure}{.23\linewidth}
    \includegraphics[width=\linewidth]{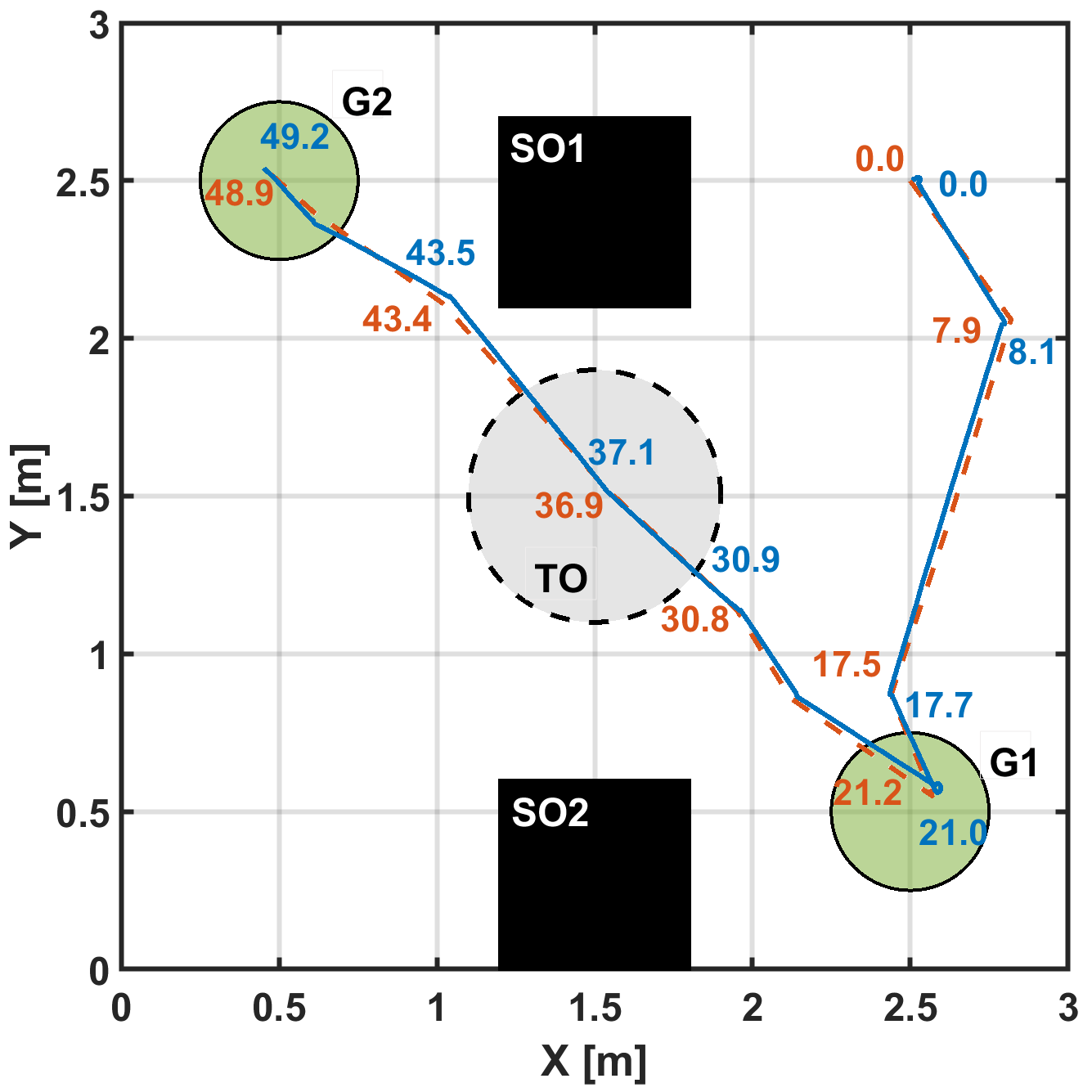}%We can add the turtlebot plot 2 here
    \caption{}%
  \end{subfigure}\hfil
   \begin{subfigure}{.24\linewidth}
    \includegraphics[width=\linewidth]{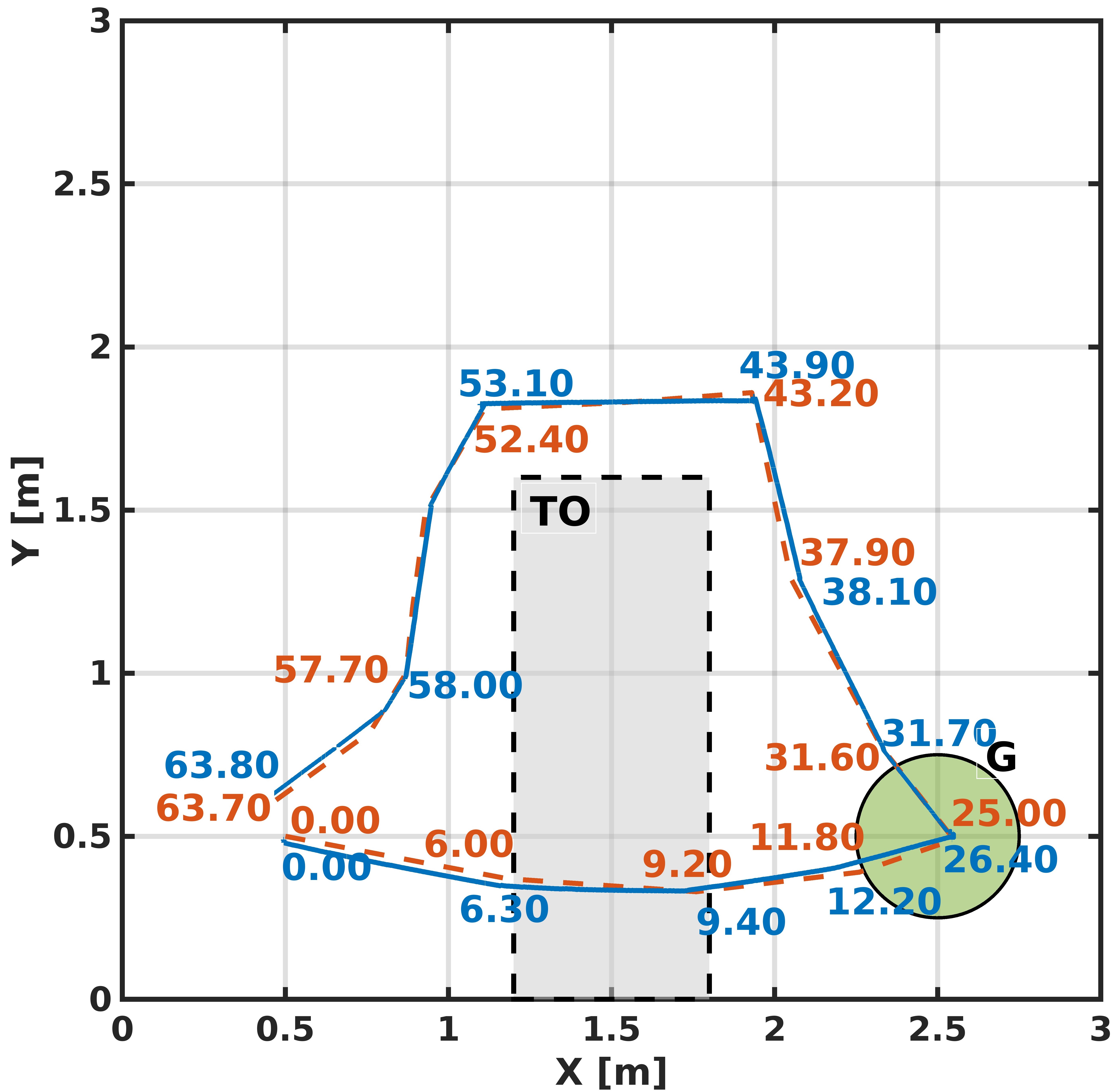}%We can add the turtlebot plot 2 here
    \caption{}%
  \end{subfigure}\hfil%
  \begin{subfigure}{.24\linewidth}
    \includegraphics[width=\linewidth]{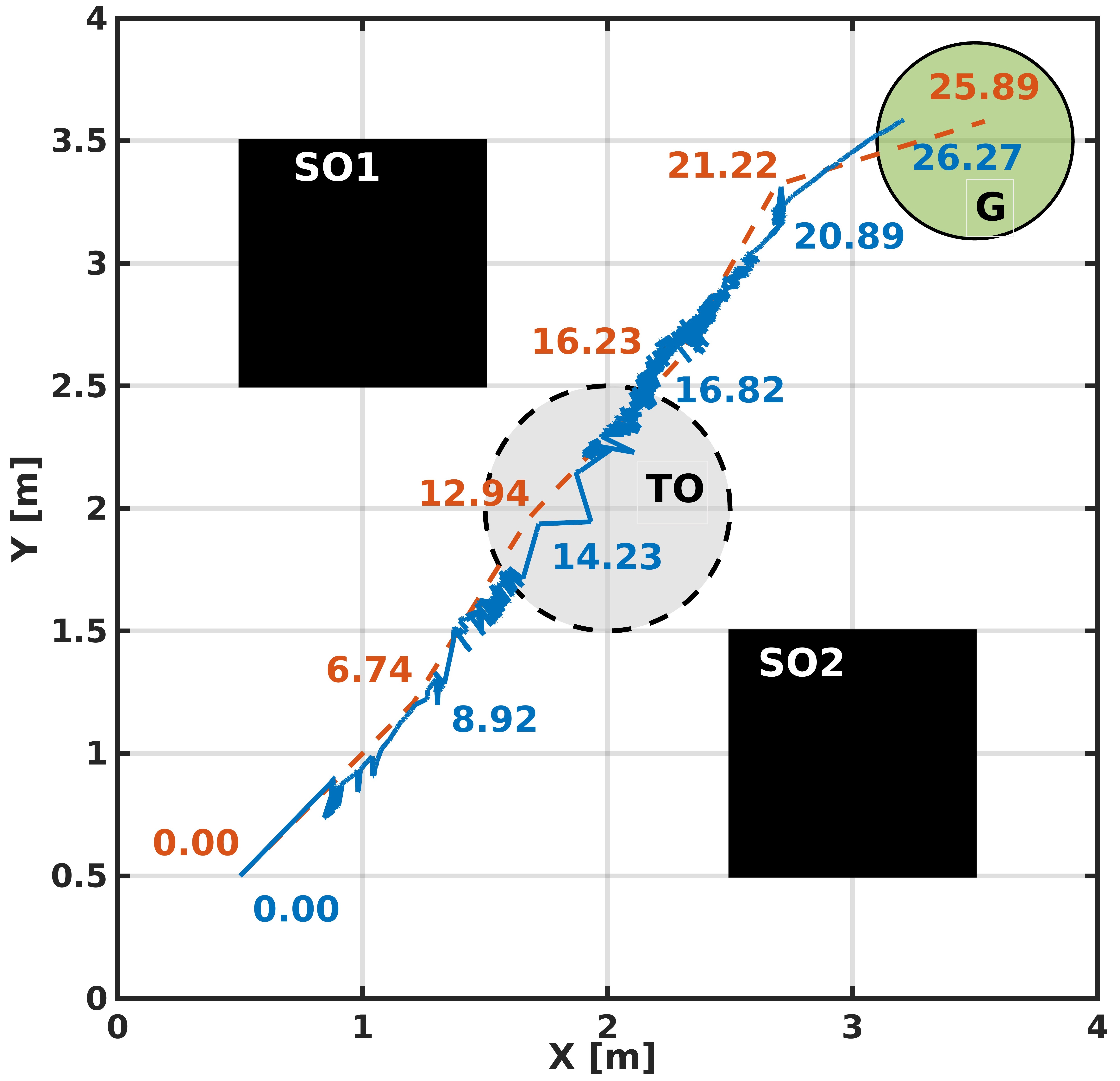}%
    \caption{}%
  \end{subfigure}\hfil
    \caption{ Results for Case study - Goal Regions - $G,G1,G2$, Static obstacles - $SO1, SO2$, Time-dependent obstacle - $TO$, Planned (Orange) and actual (Blue) trajectory with their respective data points representing the time required (in seconds) to reach the node. 
    (a) Planned and actual trajectory satisfy $\phi_1$ with Turtlebot3's starting position at $(0.5,0.5)$, (b) Planned and actual trajectory satisfies $\phi_1$ with Turtlebot3's starting position at $(2.5,2.5)$, (c) Planned and actual trajectory satisfy $\phi_2$ with Turtlebot3's starting position at $(0.5,0.5)$, (d) Quadruped - Planned and actual trajectory satisfy $\phi_3$ starting at $(0.5,0.5)$.}
    % (a) Turtlebot3 - Planned ($\mathcal{T}_P$) and actual ($\mathcal{T}_A$) trajectory satisfy $\phi_1$ with $\mathbf{x_0}=(0.5,0.5)$, (b) $\mathcal{T}_P ,\mathcal{T}_A\models\phi_1$ with $\mathbf{x_0}=(2.5,2.5)$, (c) $\mathcal{T}_P,\mathcal{T}_A \models \phi_2$ with $\mathbf{x_0}=(0.5,0.5)$, (d) Quadruped - $\mathcal{T}_P, \mathcal{T}_A\models\phi_3$ with $\mathbf{x_0}=(0.5,0.5)$}
    \label{turtlebot3}
\end{figure*}

Similarly, for linear motion primitives $\mathbf{M}_3$ and $\mathbf{M}_4$, the reward functions are designed to ensure that (i) the robot moves linearly in the desired direction with velocity $|V| \leq |V_{b_{max}}|$, and (ii) lateral deviation ($ld>\qty{0.01}{\m}$) is penalized. 

By selecting different values of $V_{b_{max}}$ and $\omega_{max}$, in the range of  $[\qty[per-mode = symbol]{-0.22}{\m\per\s}, \qty[per-mode = symbol]{0.22}{\m\per\s}]$, and $[\qty[per-mode = symbol]{-2.84}{\radian\per\s}, \qty[per-mode = symbol]{2.84}{\radian\per\s}]$, %[-0.22, 0.22]$\qty[per-mode=symbol]{}{\m\per\s}$ 
respectively, we form the set of policies $\xi$ that move the robot in forward, backward, clockwise and counterclockwise. 
%By selecting different values of $V_{b_{max}}$ and $\omega_{max}$, from the set $\pm \{0.044, 0.066, 0.088, 0.11, 0.132, 0.154, 0.176, 0.198, \\0.22\}$ m/s and $\pm \{0.568, 1.704, 2.84\}$ rad/sec, respectively, we form the set of policies $\xi$ that move the robot in forward, backward, clockwise and counterclockwise motion with different maximum velocities. 
%different RL policies are obtained to build the motion primitive library.
% \item Discount factor $\gamma=0.99$
\begin{align} \label{rew_rotn}
        r_{1_{ij}} &= \begin{cases}
         k_1*\omega, &\text{if } |\omega| \leq |\omega_{max_{ij}}|\\
         0, &\text{otherwise}
     \end{cases},   \qquad
     r_2 = -k_2*\Delta d \nonumber\\
     R_{ij} &= r_{1_{ij}} + r_2, \text{ for } i\in \{1,2\}, j\in\{1,...,p_i\},
    \end{align}
    where $k_1<0$ for $\mathbf{M}_1$, $k_1>0$ for $\mathbf{M}_2$, $k_2 > 0$, and $p_1,p_2\in\mathbb{N}$ are the number of policies that rotate the robot in a clockwise and counterclockwise manner, respectively.
The policies were trained using the PPO algorithm \cite{ppo}. Next, the reachability estimator models $\mathcal{G}$ for the control policies $\pi\in\xi$ that map the displacement (linear $\Delta d$, and angular $\Delta\theta$) to the time horizon are obtained. Data collection and training of the reachability estimator is carried out as described in Section \ref{Methodology}. Once the trajectory and the policy sequence is returned by Alg. \ref{path_planner}, we apply the policies from $\xi$ and track $\overline{\mathcal{T}}$ until Turtlebot3 satisfies \eqref{task_turtlebot1}. From Fig \ref{turtlebot3}(a) and (b), we can observe the control strategy enforced the satisfaction of $\phi_1$ with different starting positions of the robot.

In order to validate the adaptability of the proposed framework using the same policy set and reachability estimator set, we consider solving a different STL task in a new environment. The STL specification is given by
% $ \phi_2 =  \Box_{[0,65]} (0\leq x \leq 3, 0\leq y\leq 3) \wedge \square_{[20,25]}(\|(x,y)-(2.5, 0.5)\|_2\leq 0.25)\wedge \lozenge_{[30,65]}(\|(x,y)-(x_s, y_s)\|_2\leq 0.25)\wedge \Box_{[30, 65]} ((x-1.2)(x-1.8)>0 \vee y(y-1.6)>0)$, where, $(x,y)$ is Turtlebot3 position in $W$, and $(x_s,y_s)$ is the robot's starting position.
$ \phi_2 =  \Box_{[0,65]} ([0,0]\leq\mathbf{x}\leq[3,3]) \wedge \square_{[20,25]}(\|\mathbf{x}-\mathbf{g_1}\|_2\leq 0.25)\wedge \lozenge_{[30,65]}(\|\mathbf{x}-\mathbf{x_0}\|_2\leq 0.25)\wedge \Box_{[30, 65]} ((x-1.2)(x-1.8)>0 \vee y(y-1.6)>0)$, where, $\mathbf{x_0}$ is robot's initial position, $\mathbf{g_1}=(2.5, 0.5)$.
The planned and executed trajectory adheres to specification $\phi_2$, reaching $\mathbf{g1}$  and returning safely to its starting point within 30 to 65 seconds Fig. \ref{turtlebot3}(c). Generating the motion plan took 4.79 seconds on an average for both initial conditions of $\phi_1$ and 3.64 seconds for $\phi_2$.

% , demonstrating the adaptability of the learned policies to different environments and specifications. 
\vspace{-0.5em}
\subsection{Case Study 2 - Quadruped}
\vspace{-0.3em}
We demonstrate the proposed framework on a Unitree Go1 quadruped in a workspace $W=[0,4]\times[0,4]$, with goal and obstacle locations as shown in Fig \ref{fig:arena}. The framework is demonstrated for the following specifications: 
    $\phi_3 = \Box_{[0,30]} ([0,0]\leq\mathbf{x}\leq[4,4]) \wedge\lozenge_{[25,30]}(\|\mathbf{x}-\mathbf{g_1}\|_2\leq 0.4)\wedge\Box_{[0, 10]} (\|\mathbf{x}-\mathbf{o_t}\|_2>0.5) \wedge\Box_{[0,30]} ( \|\mathbf{x}- \mathbf{o_i}\|_2 > 0.4), \forall i=\{1,2\}$,
where, $\mathbf{g_1}=(3.5, 3.5),\mathbf{o_1}=(1,3), \mathbf{o_2}=(3,1), \mathbf{o_t}=(2,2)$. The set of motion primitives classes, in this case, is similar to the Turtlebot3 case - $\{Clockwise, Counterclockwise, Forward, Backward\}$. The RL policies are trained for this quadruped that executes these motions. %For the planned trajectory, we generated the control strategy and obtained the actual trajectory as shown in . 
Fig. \ref{turtlebot3}(d) shows the planned trajectory and actual trajectory (which required re-planning due to the sim2real transfer gap) in the workspace satisfies $\phi_3$. 
The video of the implementation can be found at \href{https://youtu.be/xo2cXRYdDPQ}{https://youtu.be/xo2cXRYdDPQ}.
% \begin{figure}
%     \centering
%     \includegraphics[scale=0.02]{quad_plot.jpg}
%     \caption{Experimental results for quadruped - Planned trajectory with orange data points represent the planned reach time (in seconds) and actual trajectory with blue data points indicate the actual time taken by the robot to reach the node}
%     \label{quadruped}
% \end{figure}
% \vspace{-0.5em}

% \vspace{-0.5em}
\section{Discussion and Conclusion}
We have summarized the advantages of our framework compared to other relevant works in TABLE \ref{table}. The works that have developed the control strategy to track the trajectory planned by the path planner are mentioned in column `Control strategy'. Among these works, our framework did not require any retraining of RL policies when STL tasks and environment were changed and we were able to reuse the developed RL policies for a particular robot. 
% \vspace{-0.2em}
% \begin{table}[h!]
% \centering
% \caption{Comparison with existing approaches for STL tasks}
% \label{table}
% \resizebox{\columnwidth}{!}{%
% \renewcommand{\arraystretch}{1.2}
%  \begin{tabular}{p{0.6in} p{0.4in} p{0.4in} p{0.6in}} 
%  \hline
%  \textbf{Framework} & \textbf{Control strategy} & \textbf{Adaptability to different environments}  & \textbf{Mathematical model required} \\ 
%  \hline\hline
% RRT* based algorithm \cite{s5} &\cmark &- &\cmark\\\hline
%   Real-time RRT*\cite{s9} &\xmark &- &-\\\hline
%  End-to-end RL \cite{s18, s19} & \cmark &\xmark &\xmark\\\hline
%  % Funnel-based RL controller \cite{s19} & \cmark & \xmark & \xmark\\\hline
%  Proposed framework & \cmark & \cmark &\xmark  \\  [1ex] 
%  \hline
%  \end{tabular}
%  }
% \end{table}

\begin{table}[]
\caption{Comparison with existing approaches for STL tasks}
\label{table}
\resizebox{\columnwidth}{!}{%
\begin{tabular}{@{}lccc@{}}
\toprule
\textbf{Framework} &
  \textbf{\begin{tabular}[c]{@{}l@{}}Control \\ Strategy\end{tabular}} &
  \textbf{\begin{tabular}[c]{@{}l@{}}Adaptability of Policies \\to Different Environments\end{tabular}} &
  \textbf{\begin{tabular}[c]{@{}l@{}}Requirement of \\ Mathematical Model\end{tabular}} \\ \toprule
\begin{tabular}[c]{@{}l@{}}RRT*-based\\ Algorithms \cite{s5}\end{tabular} & \cmark &-  & \cmark \\ \midrule
\begin{tabular}[c]{@{}l@{}}Real-time \\ RRT* \cite{s9}\end{tabular}       & \xmark & - & - \\ \midrule
\begin{tabular}[c]{@{}l@{}}End-to-End \\ RL \cite{s18, s19}\end{tabular}        & \cmark &\xmark  & \xmark \\ \midrule
\begin{tabular}[c]{@{}l@{}}Proposed \\ Approach\end{tabular}    & \cmark & \cmark & \xmark \\ \bottomrule
\end{tabular}%
}
\end{table}

\vspace{-0.2em}
% \section{Conclusion}\label{conclusion}
% \vspace{-0.3em}
% In this paper, an STL-compliant motion planning framework was proposed to solve the navigation task under STL specification. Real-world experiments were conducted on two robotic systems for different STL formulae, which showed the effectiveness and adaptability of our framework in different environments. 
% For any mobile robot with unknown dynamics, a library of spatiotemporal motion primitives and corresponding reinforcement learning policies were generated that are independent of the robot's pose in the environment. To construct a path that satisfies the given STL specification, a modified sampling-based planner was used while incorporating the spatio-temporal relationship of the developed RL policies into its cost function. 
Despite the framework's potential to solve many real-world problems, the approach becomes computationally expensive with the increase in the complexity of the specification and state space, limiting the current work to static environments. In future work, we will consider STL-compliant motion planning problems in the presence of dynamic obstacles and for multi-agent systems.
% \vspace{-0.3em}
% \section*{ACKNOWLEDGMENT}
% \vspace{-0.2em}
% The authors express gratitude to Mohit Vikas Javale for his invaluable assistance with hardware experimentation on the Unitree Go1 for this research.
% \vspace{-0.2em}
\bibliography{reference.bib}

\end{document}